\newcommand{\cf}{cf.\xspace}
\newcommand{\eg}{e.g.\xspace}
\newcommand{\ie}{i.e.\xspace}
\newcommand{\loss}{\ensuremath{L}}
\newcommand{\totalassignment}{\ensuremath{\mathbf{w}}}
\newcommand{\dataset}{\ensuremath{\mathcal{D}}}
\newcommand{\prob}{\ensuremath{p}}
\newcommand{\bvar}{\ensuremath{b}}
\newcommand{\bvars}{\ensuremath{\mathbf{b}}}
\newcommand{\xvars}{\ensuremath{\mathbf{x}}}
\newcommand{\yvar}{\ensuremath{y}}
\newcommand{\yvars}{\ensuremath{\mathbf{y}}}
\newcommand{\params}{\ensuremath{\theta}}
\newcommand{\tighteq}{\ensuremath{{=}}}
\newcommand{\ive}[1]{\llbracket#1\rrbracket}
\DeclareMathOperator*{\argmin}{arg\,min}
\DeclareMathOperator*{\argmax}{arg\,max}
\newcommand{\dsic}{DSIC\xspace}
\newcommand{\dscic}{DSCIC\xspace}
\newcommand{\nesy}{NeSy\xspace}
\begin{document}

\title{
Independence Is Not an Issue\\in Neurosymbolic AI}

\titlerunning{Independence Is Not an Issue in Neurosymbolic AI}

\author{Håkan Karlsson Faronius, Pedro Zuidberg Dos Martires}


\authorrunning{H. Karlsson Faronius and P. Zuidberg Dos Martires}


 \institute{AASS, Örebro University, Sweden}

\maketitle              

\begin{abstract}
A popular approach to neurosymbolic AI is to take the output of the last layer of a neural network, \eg a softmax activation, and pass it through a sparse computation graph encoding certain logical constraints one wishes to enforce.
This induces a probability distribution over a set of random variables, which happen to be conditionally independent of each other in many commonly used neurosymbolic AI models.
Such conditionally independent random variables have been deemed harmful as their presence has been observed to co-occur with a phenomenon dubbed \textit{deterministic bias}, where systems learn to deterministically prefer one of the valid solutions from the solution space over the others.
We provide evidence contesting this conclusion and show that the phenomenon of \textit{deterministic bias} is an artifact of improperly applying neurosymbolic AI.


\keywords{neurosymbolic AI \and partial label learning}
\end{abstract}

\section{Introduction}
\label{sec:intro}

Neurosymbolic (\nesy) AI is an approach to AI which seeks to combine logic and neural networks \citep{garcez2023neurosymbolic}. 
Such an integration of symbolic and sub-symbolic methods allows, inter alia, for more interpretable \cite{koh2020concept} and data efficient \cite{diligenti2017semantic,manhaeve2018deepproblog} AI systems.

One of the most popular approaches to realize \nesy systems, uses the idea of a semantic loss function \cite{xu2018semantic, manhaeve2018deepproblog},
which imposes logical constraints on the outputs of a neural network while retaining  end-to-end differentiability.

\begin{description}
    \item[C1] As a first contribution we show that neurosymbolic AI formulated using the semantic loss can be seen as a special case of so-called \textit{disjunctive supervision} \cite{zombori2024towards}, \cf Section~\ref{sec:theory}.
\end{description}
Disjunctive supervision \cite{zombori2024towards} is a  setting for multi-class classification in which examples may be labeled with a disjunction of classes, \ie a single input can have multiple valid outputs.
Our result formally relates neurosymbolic AI to a wider range of techniques that are already being explored in the machine learning community. Note, however, that important differences exist and the often implicit assumptions can result in drastically differing outcomes.

\begin{description}
    \item[C2] In Section~\ref{sec:experiments} we experimentally show how the different assumptions in NeSy AI and disjunctive supervision learning drastically affect the behavior of classifiers when trained with weak supervision. 
\end{description}

In a recent study, van Krieken et al. \cite{van2024independence} reported that under assumptions commonly made in neurosymbolic AI, \nesy systems exhibit a phenomenon they dub \textit{deterministic bias}. More specifically, they write that {\em \say{the conditional independence assumption causes neurosymbolic methods to be biased towards deterministic solutions. This is because minima of neurosymbolic losses have to deterministically assign values to some variables.}}
We do not observe any such phenomenon in our experimental evaluation.  
On the other hand, we  report on a  phenomenon related to deterministic bias for disjunctive supervision, as predicted by Zombori et al. \cite{zombori2024towards}.

 \begin{description}
     \item[C3] In Section~\ref{sec:mortem} we identify that van Krieken et al. \cite{van2024independence} do not use the semantic loss as originally derived in \cite{xu2018semantic}, which takes into account both positive examples that satisfy the constraints and negative ones that do not. Instead they use a truncated semantic loss that takes into account positives only.
    This explains why we did not observe \textit{deterministic bias} in our experimental evaluation (\cf Section~\ref{sec:experiments}).
 \end{description}

\section{Preliminaries}
\label{sec:prelim}

\subsection{From Probabilistic Logic to Neurosymbolic AI}

As pointed out by Poole and Wood \cite{poole2022probabilistic} (and already earlier by Laplace (1814) \cite{laplace1814essai}, Poole (1993) \cite{poole1993probabilistic}, Pearl (2000) \cite{pearl2000models}, and Poole (2010) \cite{poole2010probabilistic}) probabilistic models consist of deterministic systems and independent probabilistic choices (\dsic). This simple principle can, for instance, be used to construct Turing complete probabilistic logic programming languages as done by Sato with his celebrated distribution semantics \cite{taisuke1995statistical}. The \dsic principle can even be used to construct probabilistic programming languages with an uncountable number of random variables \cite{dos2024declarative}.

In the context of logic programming, which many \nesy systems are based on \cite{manhaeve2018deepproblog,badreddine2022logic}, the \dsic principle implies that a probabilistic model consists of a set of (deterministic) logic formulas and a set of literals that are not deterministically true or false but are only true or false with a certain probability. This means in turn that each formula $\phi$ is only satisfied with a certain probability:
\begin{align}
    \prob(\phi \tighteq \top) = \sum_{\totalassignment} \prob(\phi\tighteq \top, \bvars\tighteq \totalassignment),
\end{align}
where $\totalassignment = (w_1, \dots, w_N)$ ($w_i \in \{\bot, \top\}$) is called a world and denotes a value assignment (either true or false) to all the $N$ Boolean variables $\bvars$ in $\phi$. The sum goes over all $2^N$ possible value assignments.
Using Poole's \dsic principle \cite{poole2010probabilistic} we further write:

\begin{align}
    \prob(\phi \tighteq \top)
    &=
    \sum_{\totalassignment}
    \prob(\phi \tighteq \top \mid \bvar\tighteq\totalassignment ) 
    \prob(\bvars\tighteq \totalassignment)
    \\
    &=
    \sum_{\totalassignment}
    \prob(\phi \tighteq \top \mid \bvars \tighteq\totalassignment )
    \prod_{i: w_i \in \totalassignment}
    \prob(\bvar_i\tighteq w_i)
\end{align}
where $w_i\in \{\top, \bot \}$ and where we have $\prob(\bvar_i \tighteq \top)= 1 {-} \prob(\bvar_i \tighteq \bot)$. For the sake of notational ease we will often write this as:
\begin{align}
    \prob(\phi)
    =
    \sum_{\totalassignment} \prob(\phi \mid \totalassignment )
\prod_{w_i \in \totalassignment} \prob( w_i).
\end{align}
Given that the $\totalassignment$'s assign values to all the Boolean (random) variables in $\phi$ we finally have:
\begin{align}
    \prob(\phi)
    =
    \sum_{\totalassignment} 
    \ive{\phi \models \totalassignment}
    \prod_{w_i \in \totalassignment} \prob( w_i),
    \label{eq:prob_dsic}
\end{align}
where $\ive{\phi \models \totalassignment}$ is an indicator function that evaluates to one if $\phi$ models the value assignment $\totalassignment$ and zero otherwise.
In a neurosymbolic setting \citep{manhaeve2018deepproblog} we additionally have the presence of some subsymbolic data $\xvars$  in the conditioning set:
\begin{align}
    \prob_\params(\phi \mid \xvars)
    =
    \sum_{\totalassignment} 
    \ive{\phi \models \totalassignment}
    \prod_{w_i \in \totalassignment} \prob_{\params}( w_i \mid \xvars).
    \label{eq:nesy_dsic}
\end{align}
Here we use a neural parametrization for $\prob_{\params}( w_i \mid \xvars)$ depending on the parameters~$\params$. Note that by omitting an index on $\params$ we allow for parameters to be shared between the different probability distributions in the product.
Note also how the \dsic principle manifests itself in Equation~\ref{eq:prob_dsic} and Equation~\ref{eq:nesy_dsic} with the product over probabilities encoding the independent choices and the indicator function representing the deterministic system.

\subsection{Neurosymbolic Learning}

Apart from a few exceptions \citep{misino2022vael,de2023neural,di2020efficient,jiang2020generative} (this being a non-exhaustive list), most works in the \nesy literature to date are concerned with supervised classification problems and use the cross-entropy as a loss function. 

For $K$-class multi-class classification in neurosymbolic AI,
we have $K$ logic formulas $(\phi_k, \dots, \phi_K)$ (one for each class)  whose probabilities we would like to know.
This means that we have a one-to-one mapping between the classes we would like to predict and the logic formulas.
Furthermore, these $K$ formulas are mutually exclusive ($\phi_i \land \phi_j = \bot$, with $i{\neq} j$) and exhaustive ($\lor_{i=1}^{K} \phi_k=\top$). These conditions mirror the setting for traditional supervised classification, where one assumes classes to be mutually exclusive and exhaustive, as well.

Given a data point $(\xvars, \yvar)$, where $\xvars$ are the (subsymbolic) features and $\yvar$ is the class label, we write the cross-entropy as
\begin{align}
    \loss(\params, \xvars, \yvar)
    = 
    - \sum_{k=1}^{K} \ive{\yvar=k} \log \prob_\params(\phi_k \mid \xvars).
    \label{eq:ce_nesy}
\end{align}
In the special case of binary classification this reduces to
\begin{align}
    \loss(\params, \xvars, \yvar)
    &
    = 
    - \ive{\yvar=1} \log \prob_\params(\phi_1 \mid \xvars)
    - \ive{\yvar=0} \log \prob_\params(\phi_0 \mid \xvars)
    \nonumber
    \\
    &
    = 
    - \ive{\yvar=1} \log \prob_\params(\phi_1 \mid \xvars)
    - \ive{\yvar=0} \log \bigl(1- \prob_\params(\phi_1 \mid \xvars)\bigr),
    \label{eq:binary_ce_nesy}
\end{align}
where we use $1$ to denote the positive class and $0$ to denote the negative class. Manhaeve et al. \cite{manhaeve2018deepproblog} used this form to enforce logical constraints on the outputs of neural networks and Xu et al. \cite{xu2018semantic} used it to regularize neural networks. The latter work also coined the term \textit{semantic loss}.
Consequently, we will denote the neurosymbolic loss in Equation~\ref{eq:ce_nesy} by $\loss_{SL}(\params, \xvars, \yvar)$
for the remainder of the paper.

The \nesy learning problem (for the general case) now consists of finding the parameters $\params^*$ by performing the following optimization:
\begin{align}
    \params^*
    =
    \argmin_\params 
    \sum_{(\xvars, \yvar) \in \dataset}
    \loss_{SL}(\params, \xvars, \yvar),
\end{align}
where $\dataset$ denotes a dataset of features-label tuples. Once we have found $\params^*$ we can perform neurosymbolic classification with:
\begin{align}
    \hat{\yvar}(\xvars) = \argmax_{k\in \{1,\dots K\}} \prob_{\params^*} (\phi_k \mid \xvars).
\end{align}
Note that in practice, we will not be able to find the globally optimal parameters $\params^*$ as the optimization problem is non-convex.


\section{Neurosymbolic AI and Disjunctive Supervision}
\label{sec:theory}

In the classical supervised machine learning setting \cite{vapnik1995nature} we are given an input-output pair $(\xvars, \yvar)$. The goal is then to learn a model that predicts the output $\yvar$ from the inputs $\xvars$ \cite{vapnik1995nature}.
This classical setting has been generalized in various forms. For instance, to partial label learning (PLL) \cite{grandvalet2004learning,come2009learning,jin2002learning,JMLR:v12:cour11a}. PLL is a type of weakly supervised learning where each training instance is associated with a set of candidate labels, but only one of them is the true label and the specific correct label within the set is unknown.


By further relaxing the assumption in PLL that only a single candidate label is true, we obtain the learning setting of what Zombori et al. \cite{zombori2024towards} call \textit{disjunctive supervision}.

\begin{definition}
\label{def:disjsup}
    [Disjunctive Supervision \cite{zombori2024towards}] Given an input-output pair $(\xvars, \tilde{\yvars})$, with $\xvars$ denoting a real-valued vector ($\xvars \in \mathbb{R}^N$) and $\tilde{\yvars}$ denoting a bit vector ($\tilde{\yvars} = (\tilde{\yvar}_1, \dots, \tilde{\yvar}_M) \in \mathbb{B}^M$), we define the disjunctive supervision loss as 
    \begin{align}
        \loss_{DS}(\params, \xvars, \tilde{\yvars})
        =
        -
        \log \sum_{m=1}^{M} \prob_\params (m \mid  \xvars) \tilde{\yvar}_m,
        \label{eq:disj_super}
    \end{align}
    where $\prob_\params (m \mid  \xvars)$ satisfies $\sum_{m=1}^{M} \prob_\params (m \mid  \xvars)=1$.
\end{definition}

At first glance Zombori et al.'s disjunctive supervision loss looks quite different from the semantic loss used in \nesy AI. However, we show next that the semantic loss follows as a special case.

\begin{theorem}
Neurosymbolic classification (\cf Equation~\ref{eq:ce_nesy}) is a special case of disjunctive supervision (\cf Equation~\ref{eq:disj_super}).
\end{theorem}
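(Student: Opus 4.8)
The plan is to make the reduction explicit by identifying the $M$ candidate labels of disjunctive supervision with the $2^N$ worlds of the neurosymbolic model. First I would fix a data point $(\xvars,\yvar)$ whose true class is $\yvar\tighteq k^\star$, so that the indicator-weighted sum in Equation~\ref{eq:ce_nesy} collapses to the single term $\loss_{SL}(\params,\xvars,\yvar)=-\log\prob_\params(\phi_{k^\star}\mid\xvars)$. Next I would set $M\tighteq 2^N$ and enumerate the worlds $\totalassignment_1,\dots,\totalassignment_M$ of the Boolean variables $\bvars$. For each $m$ I would define the disjunctive class probability as the world probability under the \dsic factorization of Equation~\ref{eq:nesy_dsic}, namely $\prob_\params(m\mid\xvars):=\prod_{w_i\in\totalassignment_m}\prob_\params(w_i\mid\xvars)$, and I would define the supervision bit vector by $\tilde{\yvar}_m:=\ive{\phi_{k^\star}\models\totalassignment_m}$, so that $\tilde{\yvar}_m$ marks exactly the worlds satisfying the formula of the true class.

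With these identifications, substituting into Equation~\ref{eq:disj_super} gives $\loss_{DS}(\params,\xvars,\tilde{\yvars})=-\log\sum_{m=1}^{M}\bigl(\prod_{w_i\in\totalassignment_m}\prob_\params(w_i\mid\xvars)\bigr)\ive{\phi_{k^\star}\models\totalassignment_m}$, and by Equation~\ref{eq:nesy_dsic} the argument of the logarithm is precisely $\prob_\params(\phi_{k^\star}\mid\xvars)$, so $\loss_{DS}$ coincides with $\loss_{SL}$. The one hypothesis of Definition~\ref{def:disjsup} that must still be verified — and which I expect to be the only genuine step — is the normalization $\sum_{m=1}^{M}\prob_\params(m\mid\xvars)\tighteq 1$. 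This follows from the distributivity identity
\[
\sum_{\totalassignment}\prod_{i=1}^{N}\prob_\params(w_i\mid\xvars)
= \prod_{i=1}^{N}\sum_{w_i\in\{\top,\bot\}}\prob_\params(w_i\mid\xvars)
= \prod_{i=1}^{N} 1 = 1,
\]
using $\prob_\params(\bvar_i\tighteq\top)\tighteq 1-\prob_\params(\bvar_i\tighteq\bot)$, which is exactly the \dsic statement that each world factorizes into independent Boolean choices.

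To justify the phrase \emph{special case} rather than \emph{equivalence}, I would finally observe that the induced categorical distribution over the $M\tighteq 2^N$ worlds is not arbitrary but is forced to factorize across the Boolean variables (conditional independence), and that the bit vector $\tilde{\yvars}$ is not free but is pinned down by the logical models of $\phi_{k^\star}$; disjunctive supervision imposes neither restriction, so neurosymbolic classification sits strictly inside it. I anticipate no real obstacle beyond bookkeeping: the mutual exclusivity and exhaustivity of the $\phi_k$ make the collection of bit vectors across the $K$ classes a genuine partition of the worlds, which keeps the $K$-class picture coherent, while the normalization above guarantees that the construction lands inside Definition~\ref{def:disjsup}.
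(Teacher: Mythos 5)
Your proof is correct and follows essentially the same route as the paper's: collapse the class indicator to the true class, index the $2^N$ worlds by $m$, identify the factorized world probabilities with $\prob_\params(m \mid \xvars)$ and the satisfaction indicators $\ive{\phi_{y} \models \totalassignment_m}$ with the bit vector $\tilde{\yvars}$, and observe that the factorization constraint together with the orthogonality and completeness of the bit vectors (from mutual exclusivity and exhaustivity of the $\phi_k$) makes this a strict special case. Your explicit verification of the normalization $\sum_{m=1}^{M}\prob_\params(m \mid \xvars)=1$ via distributivity is a detail the paper leaves implicit, and it is a welcome addition.
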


\begin{proof}
We start by plugging in Equation~\ref{eq:nesy_dsic} into the semantic loss (Equation~\ref{eq:ce_nesy}):
\begin{align}
\loss_{SL}(\params, \xvars, \yvar)
    &=
    -
    \sum_{k=1}^{K}
    \ive{\yvar=k}
    \log
    \sum_{\totalassignment} 
    \ive{\phi_k \models \totalassignment}
    \prod_{w_i \in \totalassignment} \prob_{\params}( w_i \mid \xvars)
    \nonumber
    \\
    &=
    -
    \sum_{k=1}^{K}
    \ive{\yvar=k}
    \log
    \sum_{\totalassignment} 
    \ive{\phi_k \models \totalassignment}
    \prob_{\params}( \totalassignment \mid \xvars)
    \nonumber
    \\
    &=
    -
    \log
    \sum_{\totalassignment} 
    \ive{\phi_y \models \totalassignment}
    \prob_{\params}( \totalassignment \mid \xvars)
    \label{eq:first_simplification_learning}
\end{align}
Going from the first to the second line the assumption of conditional independence is dropped, then from the second to the third line the indicator is summed out.
Next, let us assume that we have $M$ possible worlds $\totalassignment$ and that we identify each of the worlds using an index $m$.
\begin{align}
    \loss_{SL}(\params, \xvars, \yvar)
    =
    -
    \log
    \sum_{m=1}^{M } 
    \ive{\phi_y \models \totalassignment_m}
    \prob_{\params}( \totalassignment_m \mid \xvars)
\end{align}
We can now think of $\ive{\phi_y \models \totalassignment_m}$ and 
$\prob_{\params}( \totalassignment_m \mid \xvars)$ as vectors being indexed with $m$, and more specifically even we can think of $\ive{\phi_y \models \totalassignment_m}$ as a bit vector that is $1$ for those entries where the condition holds and $0$ otherwise. 
We denote this bit vector by $\boldsymbol{\beta}_{y} =(\beta_{y 0}, \dots, \beta_{y M})$:
\begin{align}
    \loss_{SL}(\params, \xvars, \yvar)
    =
    -
    \log
    \sum_{m=1}^{M } 
    \beta_{y m}
    \prob_{\params}( \totalassignment_m \mid \xvars).
\end{align}
By simply replacing $y$ with $\boldsymbol{\beta}_{y}$ in the input of the loss and by noting that there is a one-to-one correspondence between $\totalassignment$ and $m$ we write:
\begin{align}
    \loss_{SL}(\params, \xvars, \boldsymbol{\beta}_{y})
    =
    -
    \log
    \sum_{m=1}^{M } 
    \beta_{y m}
    \prob_{\params}( m \mid \xvars).
\end{align}
By identifying $ \boldsymbol{\beta}_{y}$ with $\tilde{\yvars}$ we recover the disjunctive supervision loss. Note, however, that we have certain restrictions not present in Definition~\ref{def:disjsup}. For one, we have that $ \prob_{\params}( \totalassignment \mid \xvars)=\prod_{w_i \in \totalassignment} \prob_{\params}( w_i \mid \xvars)$. Secondly, in the case of the neurosymbolic loss we know that the formulas $\phi_k$ ($k \in \{1,\dots, K \}$) are mutually exclusive and exhaustive. This means that the $\boldsymbol{\beta}_{y}$ are orthogonal to each other and complete. In other words, $\sum_{m=1}^M \boldsymbol{\beta}_{y_1 m} \boldsymbol{\beta}_{y_2 m} = 0$ ($y_1\neq y_2$) and $\sum_{y=1}^K \boldsymbol{\beta}_{y} = (1,\dots 1)$. Given that these restrictions are not necessary for disjunctive supervision we conclude that indeed the semantic loss is a special case of the disjunctive supervision loss.
\qed
\end{proof}

\subsection*{The Winner-Take-All Effect}

In order to enforce the constraint that $\sum_{m=1}^M \prob_\params(m \mid \xvars)=1$, a common choice in partial label learning, and also disjunctive supervision learning, is to parametrize $\prob_\params(m \mid \xvars)$ using a neural network with $M$ output units and passing these through a softmax layer. Zombori et al. showed that this is problematic as shown in their \textit{Winner-Take-All} (WTA) theorem.\footnote{We state a simplified version of the WTA theorem that can be found in the Appendix of \cite{zombori2024towards} under Lemma 19. The full theorem is stated in Theorem 4 of \cite{zombori2024towards}.}
\begin{theorem}[Winner-Take-All \cite{zombori2024towards}]
Let $m$ and  $n$ be two acceptable outputs, i.e. $\tilde{y}_m = \tilde{y}_n = 1$, of a   and let $\params_{t}$ and $\params_{t+1}$ denote the parameter before and after a gradient update. 
Then it holds that
\begin{align}
\frac{\prob_{\params_{t+1}}(m\mid \xvars)}{\prob_{\params_{t+1}}(n \mid \xvars)}
>
\frac{\prob_{\params_t}(m\mid \xvars)}{\prob_{\params_t}(n \mid \xvars)}
\end{align}
exactly when $\prob_{\params_t}(m\mid \xvars)>\prob_{\params_t}(n \mid \xvars)$.
\end{theorem}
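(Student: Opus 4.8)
The plan is to exploit the softmax parametrization to turn the multiplicative statement about the ratio $\prob_{\params}(m\mid\xvars)/\prob_{\params}(n\mid\xvars)$ into an additive statement about the pre-softmax logits, where a single gradient step can be analysed in closed form. Writing $\prob_{\params}(j\mid\xvars)=e^{z_j}/\sum_k e^{z_k}$ with logits $z_j$, we have $\log\bigl(\prob_{\params}(m\mid\xvars)/\prob_{\params}(n\mid\xvars)\bigr)=z_m-z_n$. Since the exponential is strictly increasing, the ratio grows after an update \emph{exactly when} the logit difference $z_m-z_n$ grows, so the whole theorem reduces to tracking the sign of the change in $z_m-z_n$.

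First I would differentiate the disjunctive-supervision loss of Definition~\ref{def:disjsup} with respect to the logits. Abbreviating the accepted probability mass by $S=\sum_k \tilde{\yvar}_k\,\prob_{\params}(k\mid\xvars)$ and using the softmax Jacobian $\partial \prob_\params(k\mid\xvars)/\partial z_j = \prob_\params(k\mid\xvars)\,(\delta_{kj}-\prob_\params(j\mid\xvars))$, a short calculation collapses the sum and gives
\begin{align}
\frac{\partial \loss_{DS}}{\partial z_j}
=
-\frac{\prob_\params(j\mid\xvars)\,(\tilde{\yvar}_j - S)}{S}.
\end{align}
Taking one gradient-descent step $z_j \mapsto z_j + \eta\,\prob_\params(j\mid\xvars)(\tilde{\yvar}_j-S)/S$ with step size $\eta>0$ and specializing to the two accepted outputs, where $\tilde{\yvar}_m=\tilde{\yvar}_n=1$, the $S$-dependent terms are common and cancel upon subtraction, leaving
\begin{align}
(z_m-z_n)_{t+1}-(z_m-z_n)_{t}
=
\frac{\eta\,(1-S)}{S}\bigl(\prob_{\params_t}(m\mid\xvars)-\prob_{\params_t}(n\mid\xvars)\bigr).
\end{align}

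To conclude, note that $S$ is a sum of part of a normalized distribution, so $0<S\le 1$ and the prefactor $\eta(1-S)/S$ is non-negative, and strictly positive in the non-degenerate case $S<1$. Hence the logit difference, and therefore the probability ratio, strictly increases exactly when $\prob_{\params_t}(m\mid\xvars)>\prob_{\params_t}(n\mid\xvars)$, which is precisely the claimed equivalence.

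I expect the main obstacle to be the passage from a gradient step on the network parameters $\params$ to the clean logit-level statement above: when $\params$ are shared weights feeding many output units, the chain rule couples the logits and the exact cancellation need not survive a finite step. The simplified version sidesteps this by treating the softmax pre-activations as the effective parameters (equivalently, by working to first order in $\eta$), and I would state this assumption explicitly rather than hide it. The only remaining subtlety is the boundary case $S=1$, where the accepted set already carries all the probability mass, the gradient vanishes, and the ratio is left unchanged; this is why the strictness of the inequality should be read in the non-degenerate regime $S<1$.
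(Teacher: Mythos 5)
Your proof is correct, but note that the paper itself does not prove this statement at all: it imports the theorem from Zombori et al.\ \cite{zombori2024towards}, with a footnote pointing to Lemma~19 in their appendix and restricting the claim to a single linear layer followed by a softmax. Your derivation is therefore a self-contained reconstruction rather than a match or mismatch with an in-paper argument, and it is sound within exactly the scope the footnote delimits: writing $\prob_\params(j \mid \xvars) = e^{z_j}/\sum_k e^{z_k}$, your gradient $\partial \loss_{DS}/\partial z_j = -\prob_\params(j\mid\xvars)(\tilde{\yvar}_j - S)/S$ is right, the subtraction for two accepted outputs $m,n$ does cancel the common terms, and the monotone correspondence between the logit gap $z_m - z_n$ and the ratio $\prob_\params(m\mid\xvars)/\prob_\params(n\mid\xvars)$ gives the claimed equivalence. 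Two refinements are worth making. First, your hedge about working \say{to first order in $\eta$} is unnecessary in the regime the theorem actually covers: for a single linear layer the logits are linear in the weights, so a finite gradient step changes $z_j$ by exactly $-\eta\,(\partial\loss_{DS}/\partial z_j)\cdot(\lVert\xvars\rVert^2 + 1)$ for a single example, a common positive multiplier that leaves your sign analysis intact with no approximation; the genuine failure mode is only shared weights in deeper networks, which is precisely why the cited result is stated for one linear layer, as the paper's footnote says. Second, your boundary case $S=1$ is automatically excluded in every non-trivial instance: softmax outputs are strictly positive, so $S = \sum_k \tilde{\yvar}_k \prob_\params(k\mid\xvars) < 1$ whenever at least one output is unaccepted, and if all outputs are accepted then $\loss_{DS} \equiv 0$ and there is nothing to prove. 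With those two observations your strict inequality holds unconditionally in the intended setting, and your proof is a faithful (and correctly scoped) stand-in for the external result the paper relies on.
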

The WTA theorem states that the output of the softmax with the maximal initial probability will eventually capture all the probability mass. This is problematic in the  sense that the intended semantics of disjunctive supervision states that the outputs $m$ and $n$ are equally valid and that none should be preferred over the other. However, the output that initially receives the higher probability will typically end up capturing the entire probability mass after optimization -- for no other reason than random initialization.
Note that Zombori et al. have proven this only for a neural network consisting of a single linear layer followed by a softmax. However, they have experimentally shown that the effect is also noticeable in deep networks. We reproduce their results in the next section. 

Although using a softmax as the last layer in disjunctive supervision is not necessary, it seems nevertheless to constitute the de facto standard approach. This stands in contrast to neurosymbolic AI where the probability $\prob_\theta(\phi \mid \xvars)$ is parametrized using a sum over products (\cf Equation~\ref{eq:nesy_dsic}). In the next section, we experimentally show that by using the \dsic parameterization, instead of standard disjunctive supervision parametrization,  we prevent the problematic Winner-Take-All effect from happening.


    \section{Experimental Evaluation}
\label{sec:experiments}

We perform our experimental comparison using the traffic light example introduced by van Krieken et al. \cite{van2024independence}.
Suppose therefore that we are given a traffic light consisting of a red light and a green light. Additionally, we have the constraint that at most one of the lights is switched on.
Given an observation of the traffic light, we would now like to predict the probability that the input indeed satisfies the constraint.

In our experiments, we represent traffic lights using MNIST digits \cite{deng2012mnist}. Specifically, each of the two lights is represented by an MNIST image. If the specific light is on we use an MNIST digit depicting a one. Otherwise we use a zero.
This means we have four possible configurations $\{ (0,0), (1,0),(0,1),(1,1) \}$, which we represent using MNIST digits
$\{(\includegraphics[scale=0.035]{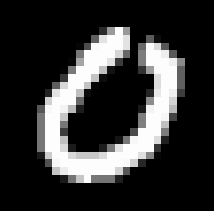}, \includegraphics[scale = 0.035]{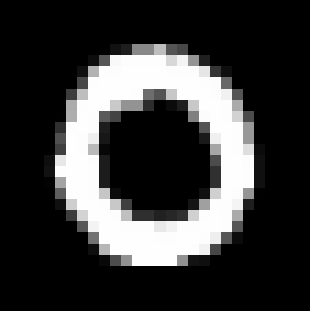}), (\includegraphics[scale=0.016]{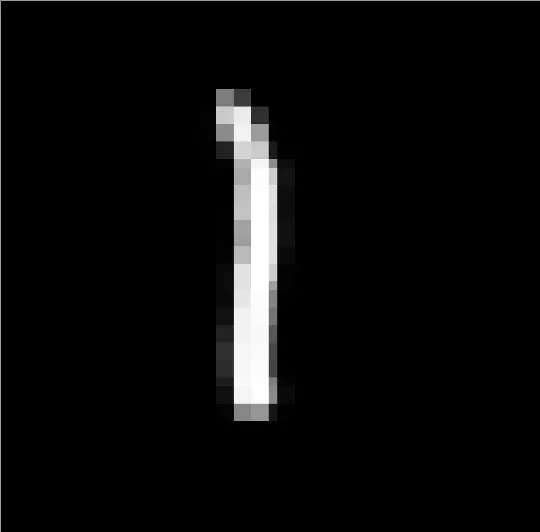}, 
\includegraphics[scale = 0.035]{arxiv/figures/mnist_0.png}), (\includegraphics[scale=0.035]{arxiv/figures/mnist_0.png}, \includegraphics[scale = 0.016]{arxiv/figures/1_PF22RoBwKIRioZjYkVjJ6w.png}), (\includegraphics[scale=0.016]{arxiv/figures/1_PF22RoBwKIRioZjYkVjJ6w.png}, \includegraphics[scale = 0.016]{arxiv/figures/1_PF22RoBwKIRioZjYkVjJ6w.png})\}$. In the \nesy setting these four configurations correspond to four possible worlds. For disjunctive supervision this means that the softmax at the very last layer has four outputs. 
We now describe the two settings in more detail.

\subsection*{The Traffic Light Example Using the Semantic Loss}

We would now like to learn a model that infers whether the constraint on the images is satisfied. Formally, the constraint \say{at most one of the lights are on} can be expressed as 
\begin{equation}
    \phi_1
    \leftrightarrow 
    (\neg\text{red} \land \text{green})
    \lor
    (\text{red} \land  \neg\text{green})
    \lor
    (\neg \text{red} \land \neg \text{green}).
\end{equation}
This means that the semantic loss takes the following form:
\begin{align}
    \loss_{SL}(\params, \xvars, \yvar)
    = 
    - \ive{\yvar=1} \log \prob_\params(\phi_1 \mid \xvars)
    - \ive{\yvar=0} \log \bigl(1- \prob_\params(\phi_1 \mid \xvars)\bigr),
\end{align}
where we use the semantic loss for binary classification, \cf Equation~\ref{eq:binary_ce_nesy}.
Furthermore, we compute the probability $\prob_\params(\phi_1 \mid \xvars)$ as:
\begin{align}
    \prob_\params(\phi_1 \mid \xvars)
    &=
    \prob_{\params_r}(\text{red} \mid \xvars_r) (1 {-}\prob_{\params_g}(\text{green} \mid \xvars_r))
    +
    \nonumber
    \\
    &
    \quad
    (1- \prob_{\params_r}(\text{red}\mid \xvars_r))  \prob_{\params_g}(\text{green} \mid \xvars_g)
    +
    \nonumber
    \\
    &
    \quad
    (1 {-} \prob_{\params_r}(\text{red} \mid \xvars_r))  (1 {-} \prob_{\params_g}(\text{green} \mid \xvars_g)),
\end{align}
with $\xvars = \xvars_r \cup \xvars_g$ denoting the subsymbolic data (MNIST image) for the red and green lights, and $\params = \params_r \cup \params_g$ denoting the parameters for the two neural networks that we use to predict the probability of the red and green light being switched on.  

This means that we have two neural networks parametrized by $\params_r$ and $\params_g$, respectively, and that take either $\xvars_r$ or $\xvars_g$ as input. Using a sigmoid in the last layer of both networks we ensure that we encode proper probability distributions $\prob_{\params_r}(\text{red}\mid \xvars_r)$ and $\prob_{\params_g}(\text{green} \mid \xvars_g)$.

\subsection*{The Traffic Light Example Using Disjunctive Supervision}

As the traffic light example has four possible configurations we parametrize a neural encoding for a probability distributions with four possible outcomes:
\begin{align}
    p_\params ( m \mid \xvars_r, \xvars_g ), \quad m \in \{1,2,3,4\}.
\end{align}
We ensure that we have a proper probability distribution using a softmax in the final layer.
The bit vectors $\tilde{\yvars}$ that we use for disjunctive supervision are:
\begin{align}
    \tilde{\yvars}_1 &= (1,1,1,0)
    \label{eq:bitvec_pos_traffic}
    \\
    \tilde{\yvars}_0 &= (0,0,0,1).
    \label{eq:bitvec_neg_traffic}
\end{align}
We use $\tilde{\yvars}_1$ for positive training examples, \ie when the images $\xvars_r$ and $\xvars_g$ indeed obey the constraints and $\tilde{\yvars}_0$ when the constraints are violated, \ie both images $\xvars_r$ and $\xvars_g$ show an MNIST one. The choice made of encoding the labels in Equation~\ref{eq:bitvec_pos_traffic} (positive) and Equation~\ref{eq:bitvec_neg_traffic} (negative) means that we associate the first three outputs of the neural network with the configurations that satisfy the traffic lights constraint (referred to as Possible World 1, 2 and 3 in Figure \ref{fig:plots_ds}), while the last one corresponds to the non-satisfying case (referred to as the Impossible world in Figure \ref{fig:plots_ds}).

\subsection*{Experimental Questions}

\begin{description}
    \item[Q1] Does the traffic light problem exhibit the Winner-Take-All effect?
    \item[Q2] Is there a qualitative difference in training behavior between the semantic loss and disjunctive supervision?
\end{description}


\subsection{Experimental Setup}

\paragraph{Dataset.}

As already mentioned we represent the traffic lights example using MNIST digits depicting ones and zeros. Given the label of the individual digits we check whether the constraint that at most one light is on is satisfied. The two images with MNIST digits and the label of the constraint being satisfied or not then constitute a data point.


Since three of the four configurations result in the constraint being satisfied, there is a data imbalance between positive and negative examples if pairs of images are sampled uniformly.
We, therefore, oversampled the configuration where both images show a one (negative case) to compensate for this data imbalance.

\paragraph{Training parameters.}

We implemented all our experiments in DeepProbLog \cite{manhaeve2018deepproblog}. In order to train the neural networks we used the Adam optimizer with a learning rate of $0.001$ and batch size of $32$. The number of training examples was $3200$ and the test size was $200$, \ie $50$ per possible configuration.


\subsection{Does the Traffic Light Problem Exhibit the Winner-Take-All Effect?}

The purpose of this experiment is to determine whether a single neural network, which receives as input two MNIST images and outputs a single distribution over four possible configurations of the traffic light example exhibits the WTA effect, \ie the probability will be concentrated in only one of the possible configurations.

\paragraph{Neural network architecture.} The neural network structure we used has the following form; each image is processed independently through a CNN with two convolutional layers (5×5 kernels, 6 and 16 channels), followed by ReLU activation and 2×2 max pooling. The extracted features were then flattened, concatenated, and passed through a fully connected classifier with layers of 120, 84, and 4 fully connected layers, using ReLU activations and a final softmax for classification.


\begin{figure*}[t!]
        \subfloat[$\neg\text{red}\land\neg\text{green}$]{%
            \includegraphics[width=.48\linewidth]{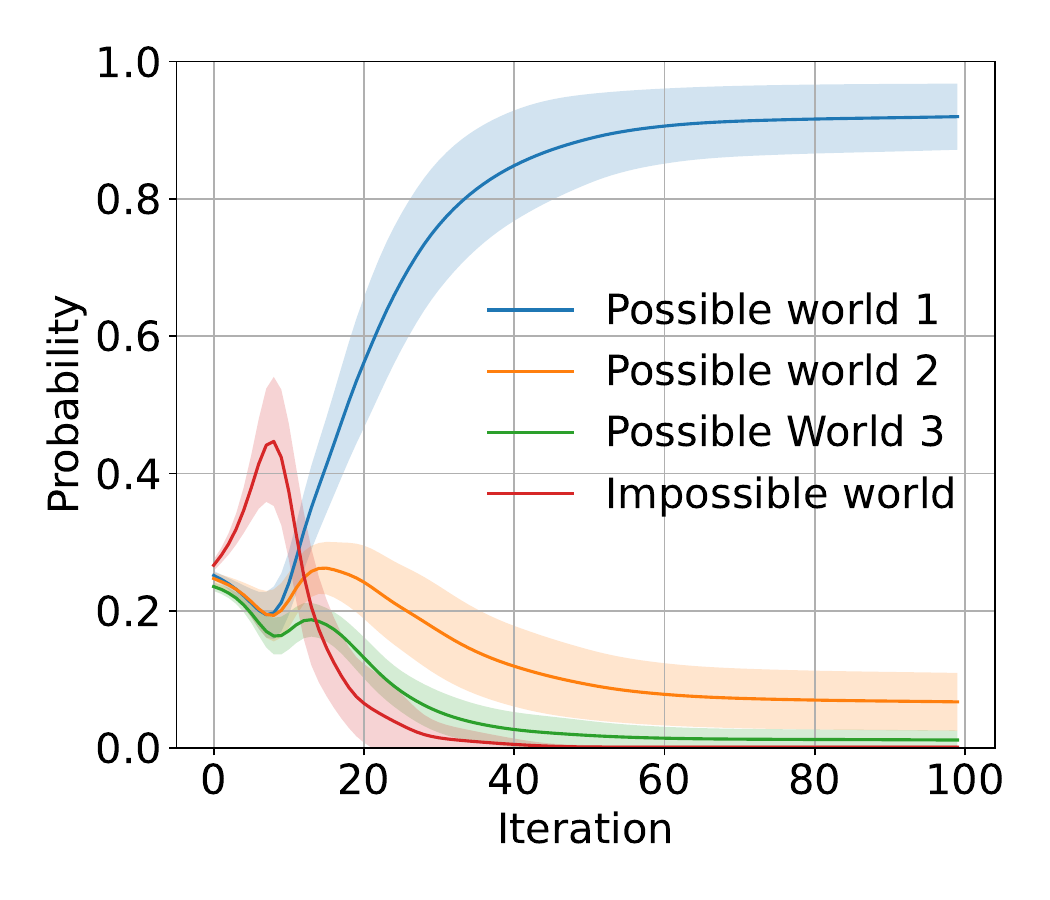}%
            \label{subfig:zsolt_a}%
        }\hfill
        \subfloat[$\neg\text{red}\land\text{green}$]{%
            \includegraphics[width=.48\linewidth]{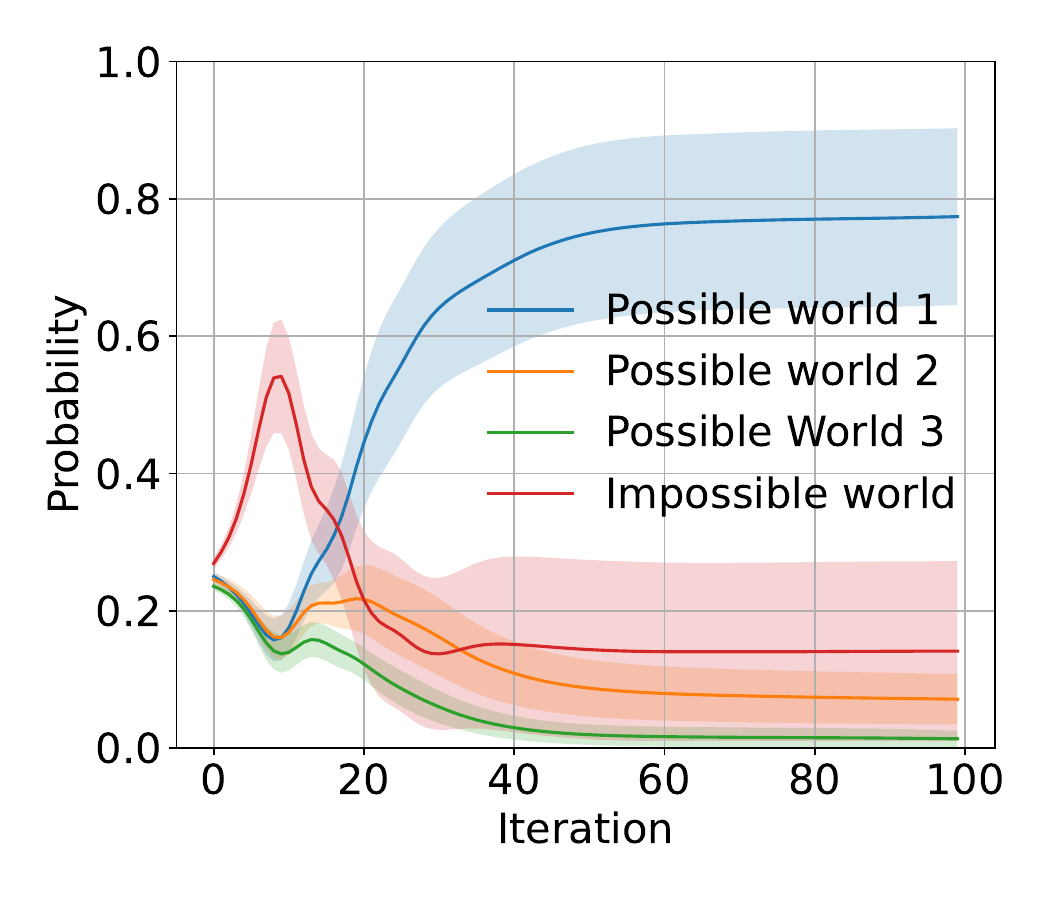}%
            \label{subfig:zsolt_b}%
        }\\
        \subfloat[$\text{red}\land\neg\text{green}$]{%
            \includegraphics[width=.48\linewidth]{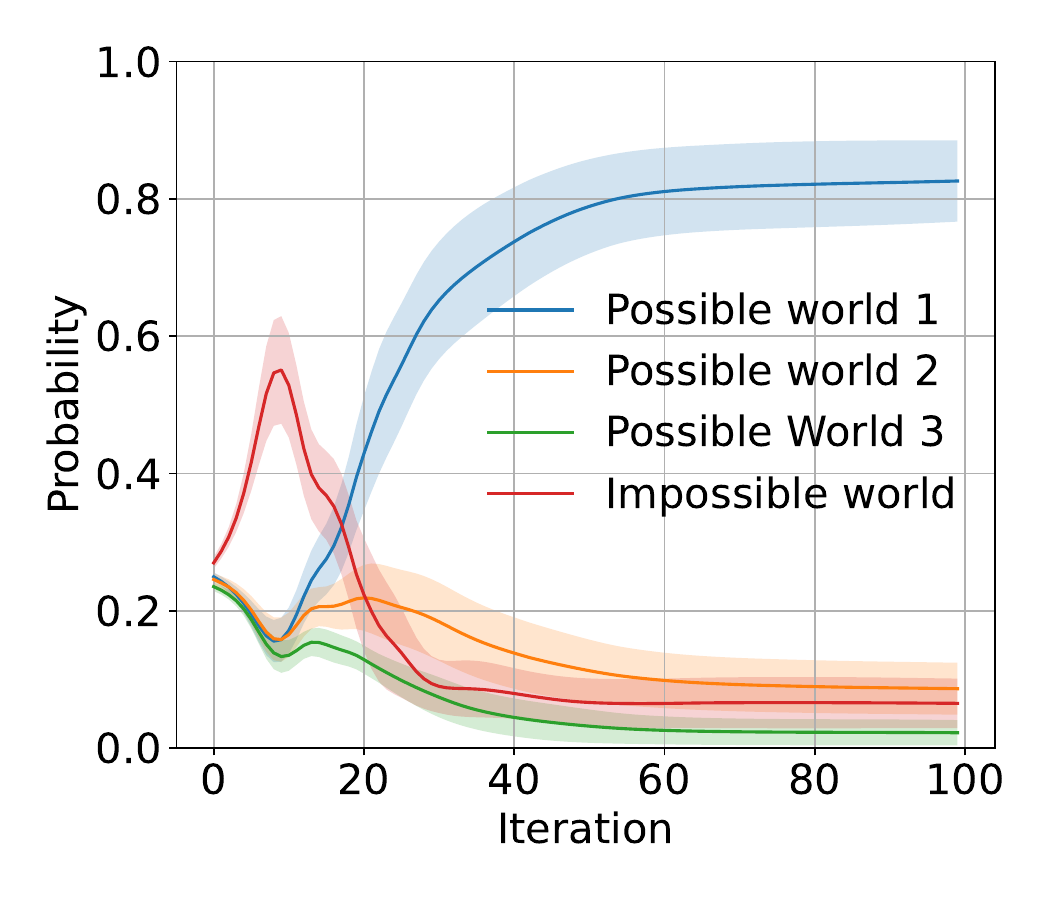}%
            \label{subfig:zsolt_c}%
        }\hfill
        \subfloat[$\text{red}\land\text{green}$]{%
            \includegraphics[width=.48\linewidth]{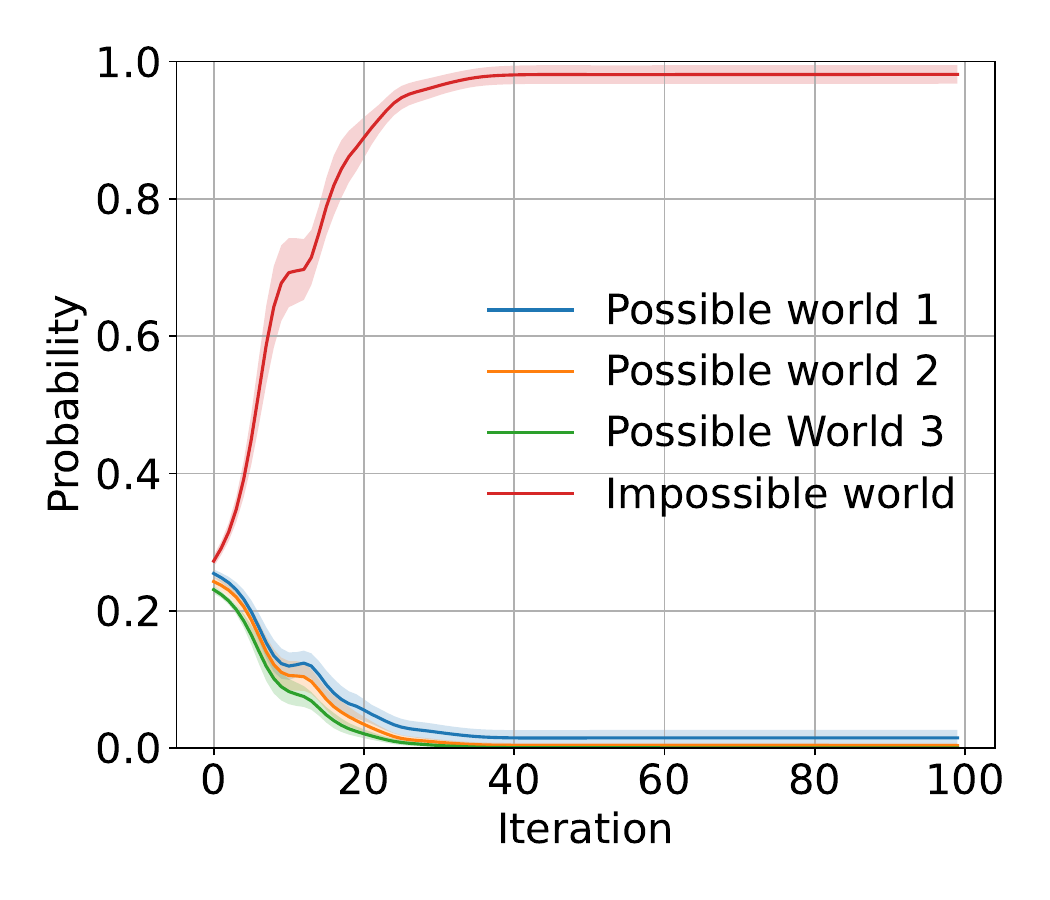}%
            \label{subfig:zsolt_d}%
        }
        \caption{Experimental evaluation with disjunctive supervision. The plots show the mean value of the probability at each iteration over 20 runs. 
        The shaded areas indicate the $95\%$ confidence intervals.
        We report the mean values separately for the four different parts of the test set. The labels under each graph indicate which part is being evaluated, \eg $\neg\text{red}\land\text{green}$ corresponds to having as input an MNIST digit depicting a zero and an MNIST image depicting a one. It can be seen that for the possible cases Figure \ref{subfig:zsolt_a} to Figure \ref{subfig:zsolt_c} one of the worlds is dominating, whilst the other ones go towards $0$ as the training iterations increase. For the impossible case \ref{subfig:zsolt_d}, the impossible world is dominant. Note that we rank the first three outputs of the softmax according to the sum of their probability over the entire run, this allows us to identify the individual outputs.}
        \label{fig:plots_ds}
    \end{figure*}

\paragraph{Results.} 
We pass,
after each update of the parameters using the disjunctive supervision loss,
the elements of the test set through the updated model. We then record the probability of the four outputs, \ie worlds.
We split this analysis into four parts, one for each of the possible cases in the test set ($\neg\text{red}\land\text{green}$, $\text{red}\land\neg\text{green}$, $\neg\text{red}\land\neg\text{green}$, and $\text{red}\land\text{green}$), where the first three adhere to the constraint of the traffic light example and the last one does not.

In Figure~\ref{fig:plots_ds} we observe the confirmation of Zombori et al.'s WTA theorem. That is, one of the outputs of the softmax captures a considerable share of the probability mass regardless of what the input is. We can observe this in Figure~\ref{subfig:zsolt_a}, Figure \ref{subfig:zsolt_b}, and  Figure \ref{subfig:zsolt_c}. Specifically, for those inputs that adhere to the constraint, the model favors one of the three possible outputs. \ie all inputs map to the same output of the softmax. Only for the case that the test example does not adhere to the constraint, \ie $\text{red}\land\text{green}$, we have a different behavior, \cf Figure~\ref{subfig:zsolt_d}.





\subsection{Is there a qualitative difference in training behavior between the semantic loss and disjunctive supervision?}
\label{sec:experiment_semanticloss}

In this experiment we use the semantic loss with two independent distributions $\prob_{\params_r}(red \mid \xvars_r)$ and $\prob_{\params_g}(green \mid \xvars_g)$.
The purpose of this experiment is to examine whether the classical neurosymbolic approach also exhibits a WTA effect.

\paragraph{Neural network architecture.}  The neural network used to parametrize $\prob_{\params_r}(red \mid \xvars_r)$ and $\prob_{\params_g}(green \mid \xvars_g)$ consisted of a convolutional encoder and a fully connected classifier. The encoder applies two convolutional layers (5×5 kernels, 6 and 16 channels) with ReLU activations, each followed by 2×2 max pooling. After passing through the encoder, the feature representation is flattened and passed through a classifier with fully connected layers of 120 and 84 units, both using ReLU activations, followed by a final layer with two output units and a sigmoid activation for binary classification.

\paragraph{Results}
We report the results in Figure~\ref{fig:nesy_cases}, where we break up the analysis again into four parts. Note that because we have explicit predictions for the probabilities $\prob_{\params_r}(red \mid \xvars_r)$ and $\prob_{\params_g}(green \mid \xvars_g)$ we are now able to identify exactly which world we are in. This is reflected in the legend of the plots. We see that for all four cases the neurosymbolic model predicts with high probability the correct world. For instance, in Figure~\ref{subfig:nesy_a} the model receives at each iteration those elements of the test set for which the MNIST digits correspond to the red and green light being zeros. As can be seen, the trained model correctly predicts the world, which means that the neural networks classifying the individual digits learn to classify the MNIST digits without receiving direct supervision but only receiving supervision on whether the constraint of the traffic light example is satisfied or not.

\begin{figure*}[t!]
        \subfloat[$\neg\text{red}\land\neg\text{green}$]{%
            \includegraphics[width=.48\linewidth]{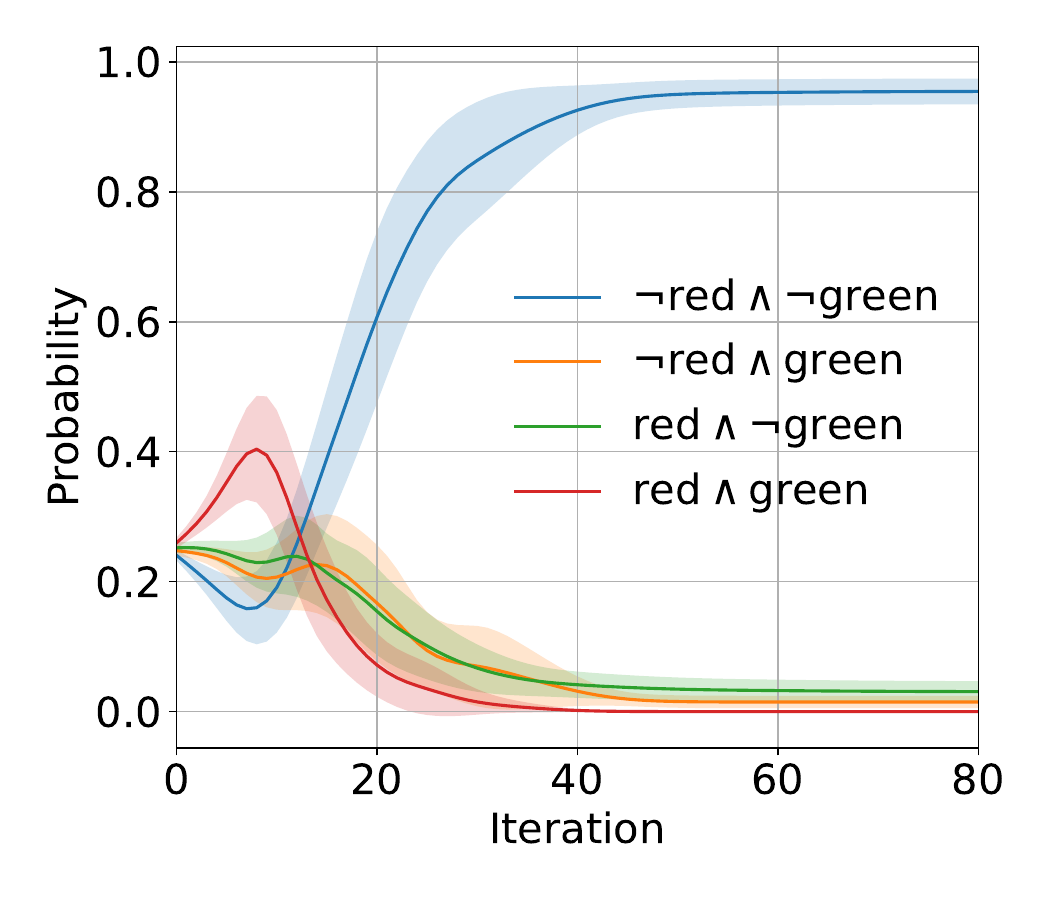}%
            \label{subfig:nesy_a}%
        }\hfill
        \subfloat[$\neg\text{red}\land\text{green}$]{%
            \includegraphics[width=.48\linewidth]{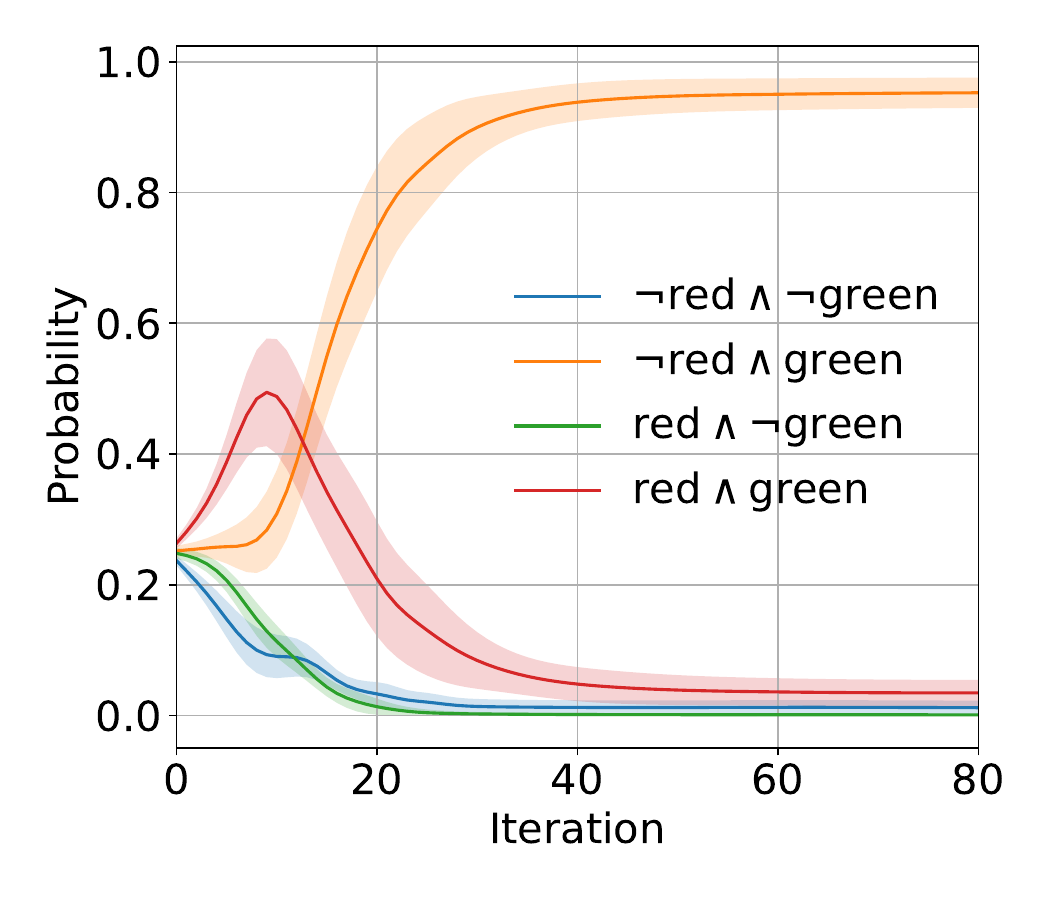}%
            \label{subfig:nesy_b}%
        }\\
        \subfloat[$\text{red}\land\neg\text{green}$]{%
            \includegraphics[width=.48\linewidth]{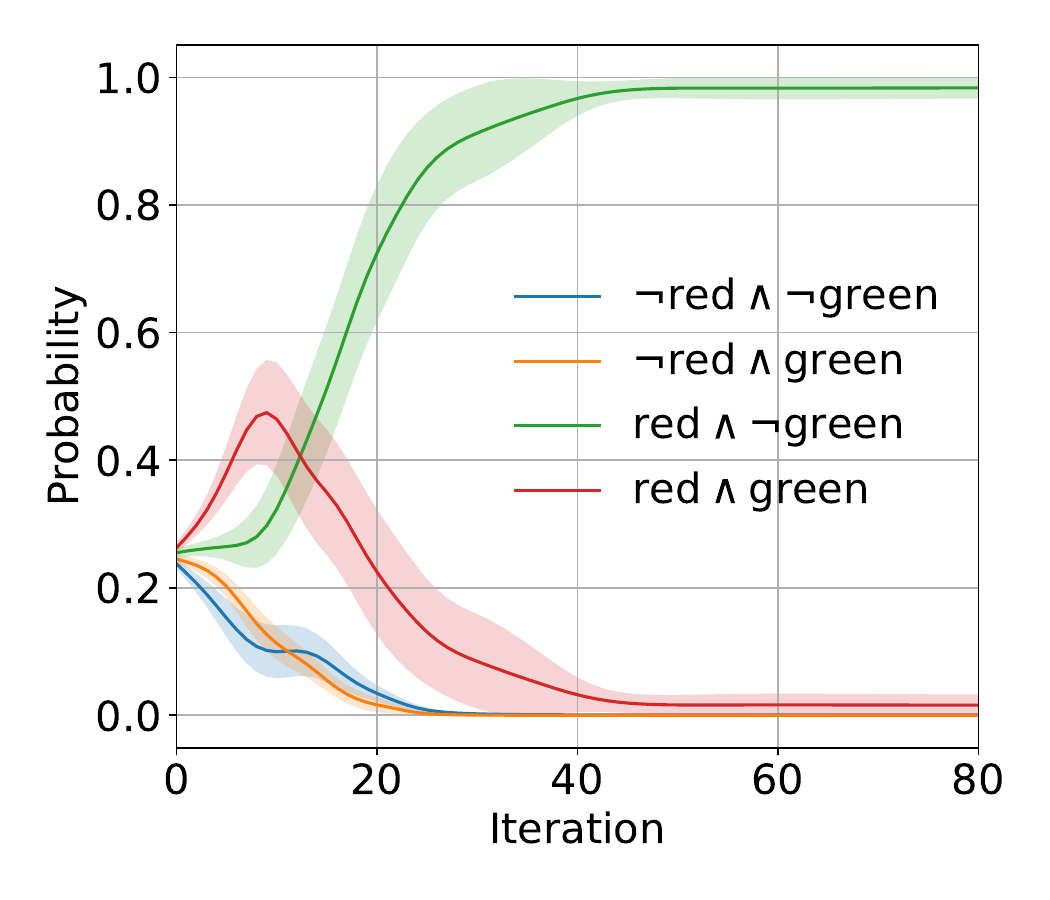}%
            \label{subfig:nesy_c}%
        }\hfill
        \subfloat[$\text{red}\land\text{green}$]{%
            \includegraphics[width=.48\linewidth]{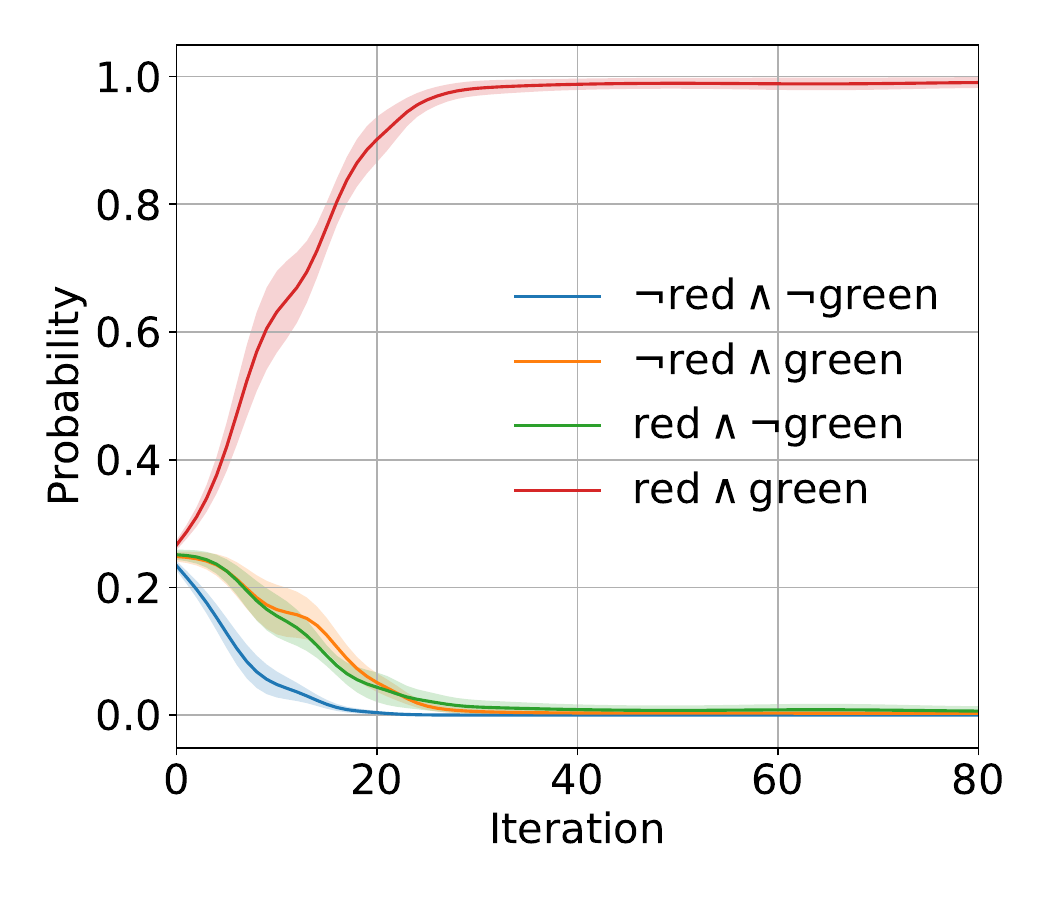}%
            \label{subfig:nesy_d}%
        }
        \caption{Empirical evaluation for the traffic lights example using the semantic loss.
        We split the evaluation again into four parts, one for each of the possible configurations and report again the mean probability over runs on the respective part of the the test set during training. 
        The plots clearly show that as training proceeds only the world that corresponds to the specific part of the test set is expressed. This also applies to the impossible case (Figure~\ref{subfig:nesy_d}.
        }
        \label{fig:nesy_cases}
    \end{figure*}

\subsection{Discussion}
Intuitively one would expect that the more general model (disjunctive supervision) would yield better results than the more restricted model (semantic loss over a conditionally factorized distribution). However, our experimental evaluation suggests the contrary. Specifically, by restricting the model class to the neurosymbolic setting, we are able to avoid the WTA effect that is inevitably present in disjunctive supervision scenarios with softmax functions in the final layer of the neural network.

We attribute the vastly different learning dynamics under neurosymbolic supervision and disjunctive supervision to the fact that constraints in the disjunctive supervision case are not specific enough to allow the model to distinguish between different worlds, \eg distinguish between $\text{red}\land \neg \text{green}$ and $\neg \text{red}\land \text{green}$.

This is akin to a phenomenon observed in neurosymbolic AI dubbed \textit{reasoning shortcuts} \cite{MANHAEVE2021103504,marconato2023neuro}. Reasoning shortcuts occur when the constraints used to supervise the outputs of a neural network do not provide enough information. The neural network will then exploit this underspecification and find a solution that formally satisfies the constraints but does not adhere to the intended meaning. We refer the reader to \cite{marconato2023neuro} for a more detailed discussion on reasoning shortcuts.







\section{A Post-Mortem on the Deterministic Bias}
\label{sec:mortem}

In the previous section we have shown that the neurosymbolic model using conditionally independent distributions $\prob_{\params_r}(red \mid \xvars_r)$ and $\prob_{\params_g}(green \mid \xvars_g)$ does not exhibit the pathologic WTA effect and learns to solve almost perfectly the traffic light example. At first glance, this seems to be somewhat odd as van Krieken et al. introduced the traffic light example to demonstrate the shortcomings of the conditional independence assumption in neurosymbolic AI.

The resolution to this apparent conundrum is rather trivial: instead of studying the semantic loss, van Krieken et al. investigated a \say{truncated} semantic loss (see Section 2 of \cite{van2024independence}). Specifically (and restricting ourselves to the binary classification case) van Krieken et al. chose to optimize the following objective:
\begin{align}
    \loss_{MSL}(\params, \xvars, \yvar)
    = 
    - \ive{\yvar=1} \log \prob_\params(\phi_1 \mid \xvars),
    \label{eq:mutliated_semanticloss}
\end{align}
to which they referred as semantic loss. Comparing this to the actual semantic loss
\begin{align}
    \loss_{SL}(\params, \xvars, \yvar)
    = 
    - \ive{\yvar=1} \log \prob_\params(\phi_1 \mid \xvars)
    - \ive{\yvar=0} \log \bigl(1- \prob_\params(\phi_1 \mid \xvars)\bigr),
\end{align}
the differences are obvious: van Krieken et al. did not include any negative training examples.

We repeated our experiment from Section~\ref{sec:experiment_semanticloss} using van Krieken et al.'s truncated semantic loss, and indeed in this case we observe the deterministic bias effect. We report our results in Figure~\ref{fig:emile_cases}. For the sake of completeness we also give a formal definition of deterministic bias (which was not formally defined in van Krieken et al.'s work). 

\begin{definition}[Deterministic Bias]
\label{def:deterministicbias}
    Let $\loss$ be a loss function and $\prob_{\params_t}$ a para-metrized probability distribution.
    We call the pair $(\loss, \prob_{\params_t})$ deterministically biased if $\prob_{\params_t}(x)\rightarrow C$, with $C \in \{0,1\}$, as $t\rightarrow \infty$, with $t$ denoting the gradient updates.
\end{definition}

We note the striking resemblance (in spirit) to Zombori et al.'s WTA theorem, as both suggests that the models will unjustly favor one of the solutions over the others. Zombori et al.'s WTA theorem, however, applies to the setting when there is a single softmax distribution over worlds, while van Krieken et al.'s bias only applies to cases when the problem is specified by a truncated semantic loss.

While the experiment reported in Figure~\ref{fig:emile_cases} corroborates the presence of a deterministic bias when using the loss in Equation~\ref{eq:mutliated_semanticloss}, the problem is that this loss does not correspond to the original semantic loss \cite{manhaeve2018deepproblog,xu2018semantic}. 
Contrary, to what van Krieken et al. claim, the presence of deterministic bias is not due to a factorized probability distribution but to not properly applying the semantic loss, \ie including negative training examples.

Furthermore, the analysis of van Krieken et al. does also not apply to other neurosymbolic learning settings, such as semi-supervised learning \cite{xu2018semantic}, as here van Krieken et al.'s loss (Equation~\ref{eq:mutliated_semanticloss}) is not used in isolation but as a regularizing term to the actual loss function.

\begin{figure*}[t!]
        \subfloat[$\neg\text{red}\land\neg\text{green}$]{%
            \includegraphics[width=.48\linewidth]{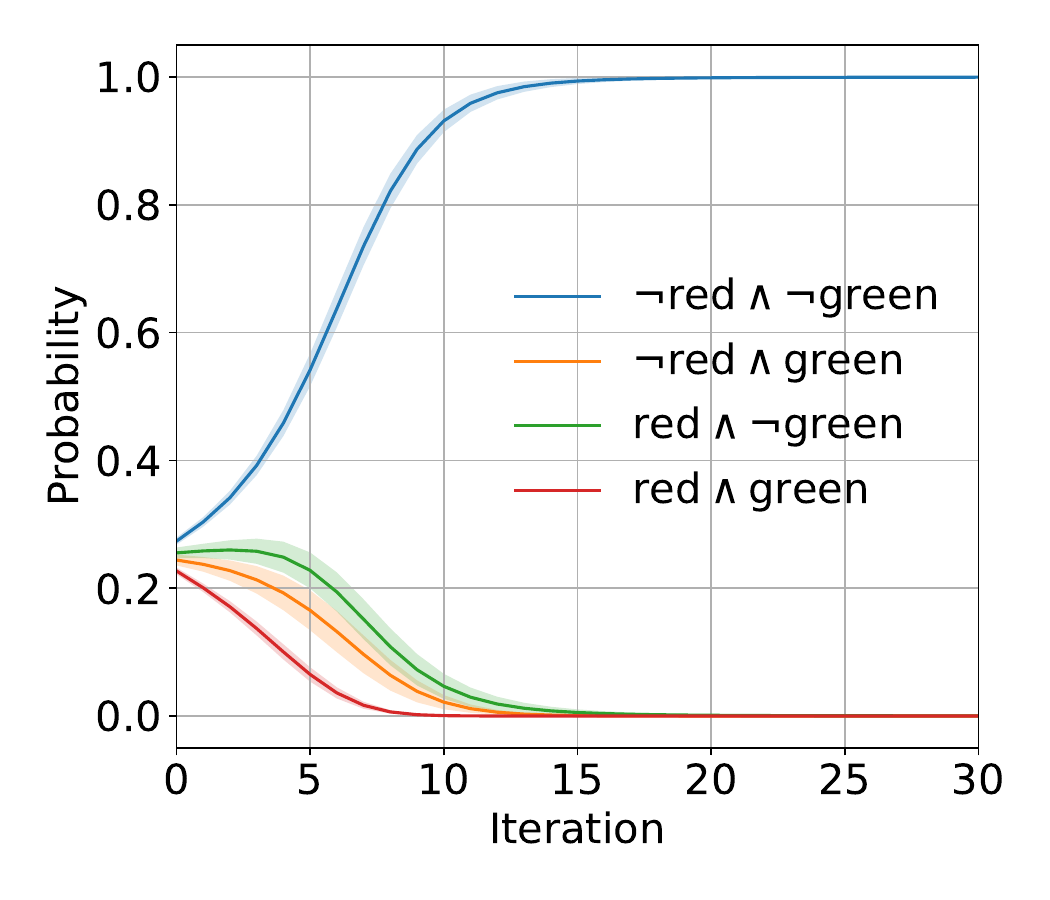}%
            \label{subfig:emile_nesy_a}%
        }\hfill
        \subfloat[$\neg\text{red}\land\text{green}$]{%
            \includegraphics[width=.48\linewidth]{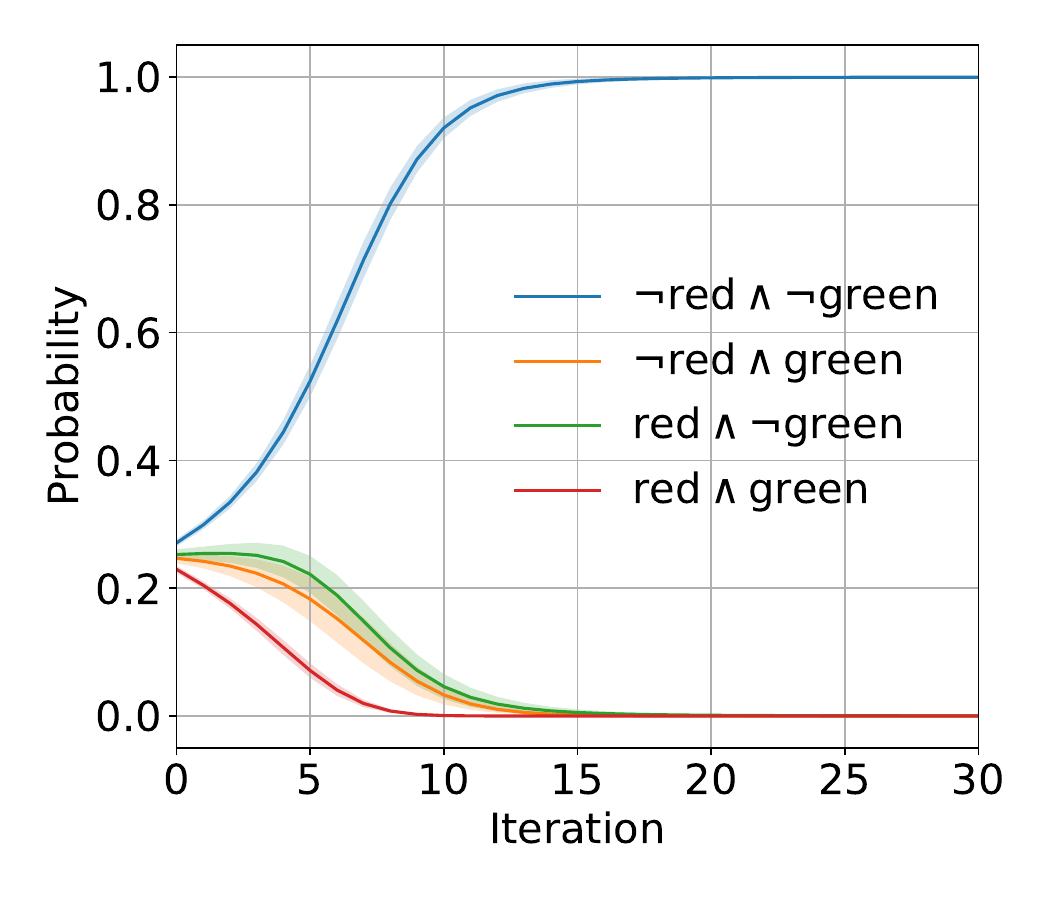}%
            \label{subfig:emile_nesy_b}%
        }\\
        \subfloat[$\text{red}\land\neg\text{green}$]{%
            \includegraphics[width=.48\linewidth]{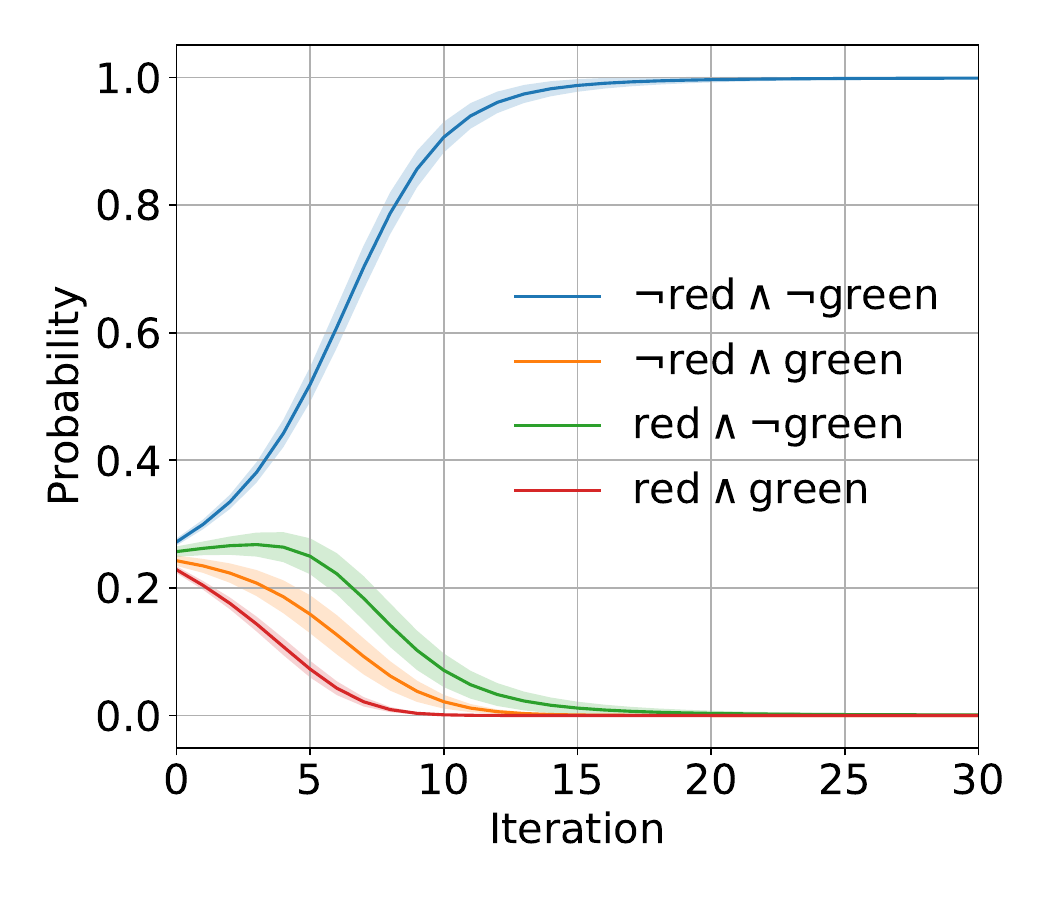}%
            \label{subfig:emile_nesy_c}%
        }\hfill
        \subfloat[$\text{red}\land\text{green}$]{%
            \includegraphics[width=.48\linewidth]{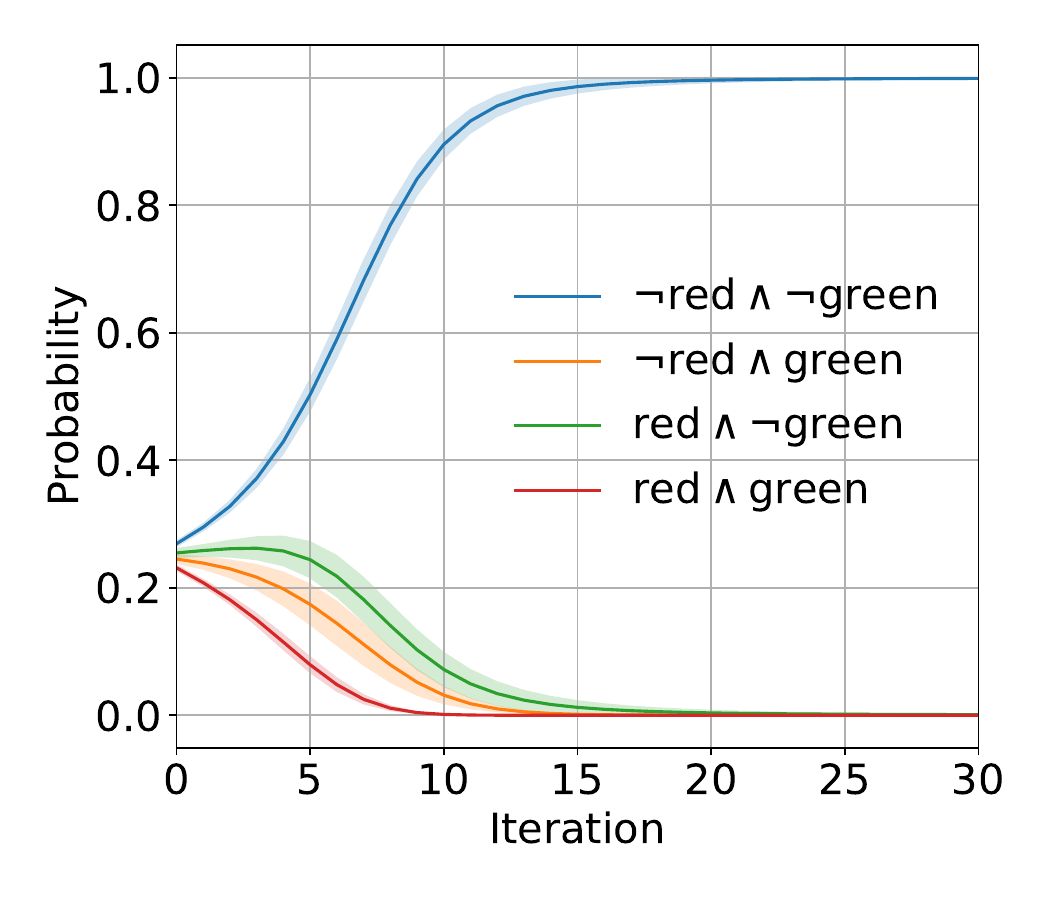}%
            \label{subfig:emile_nesy_d}%
        }
        \caption{This is the empirical evaluation for the traffic lights example using the semantic loss without negative training examples \cite{van2024independence}. For each case only the world when both lights are off is activated, even for the impossible case (Figure~\ref{subfig:emile_nesy_d})
        This means that the model does not learn.}
        \label{fig:emile_cases}
    \end{figure*}



\subsection*{A Note on Conditional Independence}

While we have already mentioned the \dsic principle (Section ~\ref{sec:prelim}) and that we can construct Turing complete probabilistic programming languages with it, we would like to revisit some statements made by van Krieken et al. on the topic of (conditionally) factorized distributions.
Van Krieken et al. argued that restricting probability distributions to distributions exhibiting conditional independence, \ie using
$
\prob_\params(\totalassignment \mid \xvars)
=
\prod_{w_i \in \totalassignment} \prob_{\params}( w_i \mid \xvars)
$
would not allow for representing all possible probability distributions in \nesy systems. However, given that systems such as DeepProbLog \cite{manhaeve2018deepproblog} and DeepSeaProbLog \cite{de2023neural} are strict neural extensions of Turing-complete probabilistic programming languages \cite{de2007problog,dos2024declarative} the factorization of $\prob_\params(\totalassignment \mid \xvars)$ into conditionally independent distributions does not hinder expressive power.

For languages with a finite vocabulary (and therefore not Turing-complete) matters are more nuanced. It was shown that indeed not all distributions can be represented using the \dsic principle \cite{buchman2017negative,buchman2017rules}. Given that conditional probability distributions are nothing but probability distributions with an explicit conditioning set this is also the case for \nesy systems with a finite vocabulary that follow the deterministic systems and \textbf{conditionally} independent choices principle (\dscic). In order to have fully expressive systems, one would need to allow for complex-valued \textit{probability strengths} \cite{buchman2017rules,kuzelka2020complex}.
Alternatively, one can also use non-factorized distributions where one assumes conditional dependencies \cite{dilkas2021weighted,zuidberg2024probabilistic,choi2018relative,shen2019conditional}. Note, these alternatives are not necessary for Turing complete languages, such as DeepProbLog, to express all probability distributions \cite{poole1993probabilistic}.







\section{Conclusions}

We have shown that the semantic loss, which is ubiquitous in neurosymbolic AI, can be viewed as a special case of the disjunctive supervision loss. However, in our experimental evaluation we have also provided evidence that the assumptions made in neurosymbolic AI are beneficial towards learning and avoid the Winner-Take-All effect present when learning with disjunctive supervision.
Curiously, this (conditional) independence assumption, which avoids the WTA effect,
was deemed problematic by van Krieken et al..

We have shown that their conclusions can be traced back to the use of a non-standard definition of semantic loss, in which negative examples are omitted. As a consequence the conclusions  of  van Krieken et al.. do not directly apply to standard neurosymbolic AI.

While learning in the neurosymbolic setting exhibits certain complications compared to standard supervised learning, \eg reasoning shortcuts, we conclude that assuming (conditional) independence between the neurally parametrized probability distributions is not one of them.




\section*{Acknowledgments} The work was supported by the Wallenberg AI, Autonomous Systems and Software Program (WASP) funded by the Knut and Alice Wallenberg Foundation. 
We also thank Luc De Raedt for valuable feedback and insightful discussions,
and Lennert De Smet, Jaron Maene, Giuseppe Marra and Vincent Derkinderen
for helpful feedback and comments on the draft of the paper.

\bibliography{references}


\end{document}